\definecolor{pale_blue}{rgb}{0.84,0.92,1}
\DeclareMathOperator*{\argmin}{arg\,min}
\DeclareMathOperator*{\argmax}{arg\,max}
\theoremstyle{definition}
\theoremstyle{theorem}
\newtheorem{theorem}{Theorem}
\theoremstyle{lemma}
\newtheorem{lemma}{Lemma}
\theoremstyle{proposition}
\newtheorem{proposition}{Proposition}
\theoremstyle{corollary}
\newtheorem{corollary}{Corollary}
\theoremstyle{lemma}
\renewenvironment{proof}[1][\proofname]{\par
  \vspace{-\topsep}
  \pushQED{\qed}%
  \normalfont
  \topsep0pt \partopsep0pt 
  \trivlist
  \item[\hskip\labelsep
        \itshape
    #1\@addpunct{.}]\ignorespaces
}{%
  \popQED\endtrivlist\@endpefalse
  \addvspace{6pt plus 6pt} 
}
\begin{document}

\icmltitlerunning{Active Learning amidst Logical Knowledge}
\twocolumn[
\icmltitle{Active Learning amidst Logical Knowledge}
\begin{icmlauthorlist}
	\icmlauthor{Emmanouil A. Platanios}{cmu-msri}
	\icmlauthor{Ashish Kapoor}{msr}
	\icmlauthor{Eric Horvitz}{msr}
\end{icmlauthorlist}

\icmlaffiliation{cmu-msri}{Carnegie Mellon University, Pittsburgh, PA, USA (research performed during an internship at Microsoft Research)}
\icmlaffiliation{msr}{Microsoft Research, Redmond, WA, USA}

\icmlcorrespondingauthor{Emmanouil A. Platanios}{e.a.platanios@cs.cmu.edu}

\icmlkeywords{accuracy estimation, unsupervised learning, semi-supervised learning, never-ending learning, NELL}

\vskip 0.3in
]

\printAffiliationsAndNotice{}

\setlength{\belowdisplayskip}{5pt} \setlength{\belowdisplayshortskip}{5pt}
\setlength{\abovedisplayskip}{5pt} \setlength{\abovedisplayshortskip}{5pt}

\begin{abstract}

Structured prediction is ubiquitous in applications of machine learning such as knowledge extraction and natural language processing. Structure often can be formulated in terms of logical constraints. We consider the question of how to perform efficient active learning in the presence of logical constraints among variables inferred by different classifiers. We propose several methods and provide theoretical results that demonstrate the inappropriateness of employing uncertainty guided sampling, a commonly used active learning method. Furthermore, experiments on ten different datasets demonstrate that the methods significantly outperform alternatives in practice. The results are of practical significance in situations where labeled data is scarce.

\end{abstract}

\vspace{-0.6em}
\section{Introduction}
\vspace{-0.1em}

Tasks which involve learning several classifiers whose outputs are tied together by logical constraints are abundant in machine learning. As an example, we may have two classifiers in the Never Ending Language Learning (NELL) project \citep{Mitchell:2015wo} which predict whether noun phrases represent animals or cities, respectively. In this case, the outputs of the two classifiers are mutually exclusive. Many such tasks hinge on the training of a large number of classifiers in situations where obtaining labeled data is expensive. The difficulty of acquiring labels leads to the common approach (highlighted in Figure \ref{fig:example_active_learning_system}) of performing an initial training of classifiers with a small number of labeled examples, and then iteratively identifying the most valuable additional labels to acquire, followed by the re-training of the classifiers. We seek methods that are capable of performing such {\em active learning} \citep{Settles:2012eg}, an instance of {\em semi-supervised learning}. In this paper, we propose  methods for active learning that share a common underlying goal: efficient identification of the most valuable labels to acquire in the presence of {\em logical constraints} among the outputs of classifiers being trained. Examples of such constraints are mutual exclusion (e.g., in multi-class/one-vs-all classification) and subsumption (e.g., in hierarchical classification) among target variables. In active learning for mutual exclusion and subsumption, we need to consider the complexities of behavior arising in the interactions among the linked classifiers. We shall provide theoretical justification for the proposed methods that resonates with intuition. As we will show, the results challenge the core idea behind {\em uncertainty guided sampling}, a method in common practice.  

\begin{figure}[t!]
	\centering
    \includegraphics[width=0.5\textwidth,trim=110 70 50 80,clip]{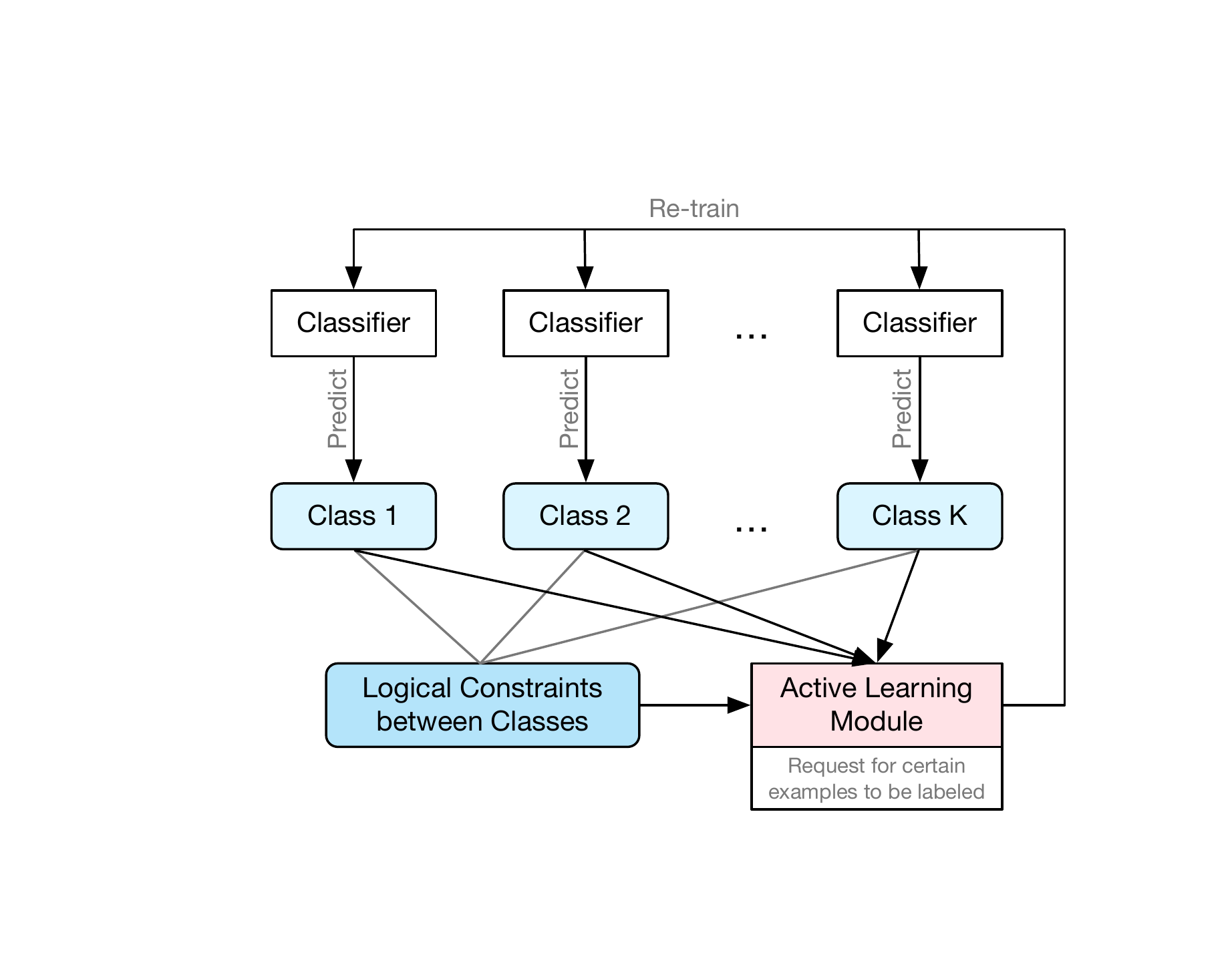}
    \vspace{-1.5em}
	\caption{Illustration of active learning in an interdependent multiple classifier setting.}
	\vspace{-1.0em}
	\label{fig:example_active_learning_system}
\end{figure}

We motivate our work with challenges in information extraction, where noun phrases are mapped to various categories (e.g., \texttt{animal}, and \texttt{bird}) and relations (e.g., \texttt{animalEatsFood}). It is easy to see how these categories and relations can be tied through logical constraints. For example, one might say that \texttt{animal} and \texttt{location} are mutually exclusive, and \texttt{animal} subsumes \texttt{bird}. We consider examples highlighted by work on the NELL project \citep{Mitchell:2015wo}. NELL currently performs over 2,500 learning tasks and it is thus too expensive to obtain enough labeled data for each task separately. The ability to rank examples by the utility of discovering their labels would enable the system to more efficiently allocate and use resources available for labeling. Our goal in this paper is to provide systems such as NELL with this ability, such that their learning rate would be significantly increased with respect to the resources available for labeling.

\vspace{-0.6em}
\section{Related Work}
\vspace{-0.15em}

The literature covers many projects in the realms of active learning \citep{Settles:2012eg,Ruvolo:2013wr} and decision theory (e.g., the core concept of {\em value of information} and its use in guiding the collection of examples) \citep{Kapoor:2007uu,Krause:2009wh}. Related work on computing the value of information for inference that leverages structural information includes an effort showing how the topology of influence diagrams could be used to assert an ordering over the value of information for variables \citep{Poh:1996uj}. However, most existing approaches to active information gathering for machine learning are directed at collecting single labels for one classifier. Furthermore, even approaches that deal with settings involving multiple labels do not make use of logical constraints that may exist among labels \citep{Reichart:2008wr,Zhao:2015ce}. Work in the area of semi-supervised learning makes it clear that such constraints are present in many practical settings and that they can indeed prove useful if used appropriately \citep{Chang:2007vqa,Chang:2008vh,Mitchell:2015wo}.

There have been a few approaches that make use of such constraints. First, we note that query-by-committee (QBC) can be viewed as a special case of our framework, where the logical constraint is that committee members must agree. \citet{Settles:2008:AAL:1613715.1613855} propose approaches to perform active learning for sequence labeling tasks, including uncertainty sampling and QBC. \citet{Culotta:2005:RLE:1619410.1619452} consider adding constraints to such tasks. To the best of our knowledge, they are the first to consider general constraints. In distinction to this prior work, we do not focus on the difficulty of each labeling task. We consider a wider range of tasks. Culotta and McCallum present only one instantiation of our more general formulation. \citet{Luo:2013lu} use uncertainty sampling, where probabilities are computed using classifiers that account for constraints. \citet{RothSm08} score instances using the margin of learned classifiers and in the case of binary classification, margin-based active learning is equivalent to uncertainty sampling. \citet{Bilgic:2010bi} consider dependencies among input instances and their labels. They cluster instances and look at disagreements of two kinds of classifiers over the clusters. Other methods that use ``side'' information in active learning include those of \citet{Kapoor:2009}, \citet{Wallace:2010wa}, and \citet{Angeli:2014an}. 

Our method considers the important and common case where there are logical constraints over the output space, such as mutual exclusion and subsumption. The ubiquitous nature of such logic relationships creates a need for them to be addressed “head on”. The previously mentioned related work only deals with other kinds of probabilistic constraints. \citet{Harpale:2012wq} and \citet{Zhang:2010ti} have considered this setting, but they fail to provide theoretical justifications nor provide deep experimental support. Furthermore both approaches can be seen as separate instantiations of our more general framework, for which we also provide a formal analysis along with an extensive experimental evaluation.

\vspace{-0.6em}
\section{Proposed Methods}
\label{sec:proposed_methods}
\vspace{-0.1em}

We now provide a description of methods for performing active learning. The methods select examples to be labeled before each re-training step (i.e., the red box in Figure \ref{fig:example_active_learning_system}). Let us consider a setting where we have a set of binary labels $\smash{Y_k^i\in\{0,1\}}$, for $k=1,\hdots,K$ and $i=1,\hdots,N$, for a provided set of instances $X^1,\hdots,X^N$. $\smash{Y_k^i}$ denotes whether instance $X^i$ belongs to class $k$. One example would be where $X^i$ represents a particular noun phrase (NP) and $Y_k^i$ is a particular label for that NP, indicating whether it is a city or not. There exists a set of logical constraints among the $K$ labels for each instance which determine whether an assignment of values to those labels is valid or not. Let the marginal probability of each label being positive be defined as $\smash{p_k^i\triangleq\mathbb{P}_{X^i\sim\mathcal{D}}(Y_k^i=1)}$, for $k=1,\hdots,K$ and $i=1,\hdots,N$, where $\mathcal{D}$ is the distribution of the instances $X^1,\hdots,X^N$. Given a set of observed labels (which could be empty) and these marginal probabilities, we want to determine which label\footnote{Note that the word ``label'' here refers to a particular label-instance pair (i.e., we ask for a single label of a single instance at a time). This is the convention we use throughout this paper.} to request in the active learning process in order to gain the most information. Thus, we use a {\em scoring function} to score each unobserved label based on how much information is gained by observing it, and we then pick the label with the highest such score. We note that {\em information gain} can be defined in many ways depending on the task at hand and the evaluation metric that is being used. Our approach is initially motivated by the loose and possibly na\"{i}ve definition of information gain as the expected number of labels one obtains after asking for a single label (i.e., due to the constraints among the labels).

We note that the common strategy of {\em uncertainty guided sampling} for allocating labeling effort uses the entropy of a label as its scoring function. That is:
\begin{equation}
\label{eq:entropy_scoring_function}
	\mathcal{S}_{\textrm{entropy}}(Y_k^i)\triangleq -p_k^i\log{p_k^i}-(1-p_k^i)\log(1-p_k^i).
\end{equation}
Thus, this function can be thought of as scoring each label based only on its own uncertainty ignoring any dependencies among the labels. Our proposed methods make use of logical constraints among the labels, thus considering key dependencies. We therefore expect them to perform better in practice.

\subsection{A Simple Constraint: Mutual Exclusion}
\label{sec:mutual_exclusion}

Let us first consider a simple, yet powerful and common logical constraint among labels: {\em mutual exclusion}. We consider a setting where, for each value of $i$ (i.e., instance), all labels (i.e., $Y_1^i,\hdots,Y_K^i$) are mutually exclusive with one another. This means that, for each instance, at most one label can be positive. It is easy to see that, if we discover that a label for a specific instance is positive, then all other labels must be negative for that instance. However, if the answer is negative, then we cannot infer the value of any other label. Thus, intuitively we see that it might make sense to ask for the label with the highest marginal probability of being equal to $1$ (i.e., the $Y_k^i$ with the highest probability $p_k^i$). We now discuss this approach and provide theoretical justification for this intuition. We start by suggesting the following scoring function:
\begin{equation}
\label{eq:probability_scoring_function}
	\mathcal{S}_{\textrm{probability}}(Y_k^i)\triangleq p_k^i.
\end{equation}
For the following theoretical justification, we shall ignore the instance superscript (i.e., $i$) and consider the case where there is only a single instance $X$.  We shall propose a theorem related to this scoring function, but first we will state a lemma that will be used in the forthcoming proof:
\begin{lemma}
\label{lem:mutual_exclusion_theorem_lemma}
Let $x\in[0,1]$, and $c\in[0,1-x]$. Then, the following function is monotonic with respect to $x$: $f(x)=(1-x-c)\log{(1-x-c)}-(1-x)\log{(1-x)}$.
\end{lemma}
\begin{proof}
We have that $\frac{\partial f(x)}{\partial x}=\log{(1-x)}-\log{(1-x-c)}$ and since the logarithm is a monotonic function, we know that $\frac{\partial f(x)}{\partial x}\geq 0$. Thus, $f(x)$ is monotonic.
\end{proof}

\begin{theorem}
\label{thm:mutual_exclusion_theorem}
Given a set of mutually exclusive labels, the scoring function in Equation \ref{eq:probability_scoring_function} induces the same ranking of labels as the information-theoretic information gain.
\end{theorem}
\begin{proof}
Due to the mutual exclusion constraint, we have $\mathbb{P}_{X\sim\mathcal{D}}(\{Y_k=0\textrm{ for }k=1,\hdots,K\})=1-\smash{\sum_{k=1}^K{p_k}}$. For notational convenience, let us denote this quantity by $p_0$ and also omit the $X\sim\mathcal{D}$ subscript from the probability operator notation henceforth. Now, note that:
\begin{equation*}
\begin{split}
	\mathbb{P}(\bm{y_{-k}})&=\mathbb{P}(\bm{y_{-k}}\land Y_k=1)+\mathbb{P}(\bm{y_{-k}}\land Y_k=0),\\
	&=\begin{cases}
		0 & \textrm{, if }\bm{y_{-k}}\textrm{ has more than one 1s}, \\
		p_l & \textrm{, if }y_l=1\textrm{ for }l\neq k, \\
		p_k+p_0 & \textrm{, otherwise},
	\end{cases}
\end{split}
\end{equation*}
where $\bm{y_{-k}}$ refers to an assignment of values to all labels $Y_l$, where $l=1,\hdots,K$, and $l\neq k$, and $y_l$ refers to an assignment of $Y_l$. Let us also denote the information-theoretic information gain of variable $Y_k$ by $\mathcal{I}(Y_k)$. We then have that if $p_k\geq p_l$, for some $k\neq l$, then:
\begin{align*}
	&\mathcal{I}(Y_k)-\mathcal{I}(Y_l) \\
	&\;=\mathcal{H}(\bm{Y_{-k}})-\mathcal{H}(\bm{Y_{-k}}\mid Y_k)-\mathcal{H}(\bm{Y_{-l}})+\mathcal{H}(\bm{Y_{-l}}\mid Y_l), \\
	&\;=\mathcal{H}(\bm{Y_{-k}})+\mathcal{H}(Y_k)-\mathcal{H}(\bm{Y_{-l}})-\mathcal{H}(Y_l), \\
	&\;=-\sum_{\bm{y_{-k}}}{\mathbb{P}(\bm{y_{-k}})\log{\mathbb{P}(\bm{y_{-k}})}}-p_k\log{p_k}, \\
	&\hspace{0.49cm}+\sum_{\bm{y_{-l}}}{\mathbb{P}(\bm{y_{-l}})\log{\mathbb{P}(\bm{y_{-l}})}}+p_l\log{p_l}, \\
	&\hspace{0.49cm}-(1-p_k)\log{(1-p_k)}+(1-p_l)\log{(1-p_l)}, \\	
	&\;=(p_l+p_0)\log{(p_l+p_0)}-(1-p_k)\log{(1-p_k)}, \\
	&\hspace{0.49cm}-(p_k+p_0)\log{(p_k+p_0)}+(1-p_l)\log{(1-p_l)}, \\
	&\;=(1-p_k-c)\log{(1-p_k-c)}-(1-p_k)\log{(1-p_k)}, \\
	&\hspace{0.49cm}-(1-p_l-c)\log{(1-p_l-c)}+(1-p_l)\log{(1-p_l)}, \\
	&\;\geq 0,
\end{align*}
where $\mathcal{H}(Y_k)$ corresponds to the entropy of the $Y_k$ variable, $\mathcal{H}(\bm{Y_{-k}})$ corresponds to the entropy of all variables $Y_l$, where $l=1,\hdots,K$ and $l\neq k$, and $\smash{H(p)\triangleq-p\log{p}-(1-p)\log{(1-p)}}$. The sums are over all possible assignments of the corresponding variables. The last step follows from Lemma \ref{lem:mutual_exclusion_theorem_lemma}, where $\smash{c=\sum_{k'=1,k'\neq k,l}^K{p_{k'}}}$. The above inequality implies that the ranking of labels induced by the information gain $\mathcal{I}(Y_k)$ is the same as the ranking induced by using the scoring function $\smash{\mathcal{S}_{\textrm{probability}}(Y_k^i)}$ and the proof is complete.
\end{proof}

One of the most interesting consequences of Theorem \ref{thm:mutual_exclusion_theorem} is that we now have a very efficient way to rank labels based on their information gain. Also note that more often than not, classification systems are evaluated based on the area under the precision-recall curve (AUC). Furthermore, one might care about maximizing the number of ``gold'' labels, meaning labels that are guaranteed to be correct. Intuitively, the AUC increases with the number of ``gold'' labels. We highlight the fact that the probability scoring function of Equation \ref{eq:probability_scoring_function} was motivated by picking the label that is most likely to provide the greatest number of ``gold'' labels (i.e., labels that are fixed to $0$).

We note that the scoring function $\mathcal{S}_{\textrm{probability}}(Y_k^i)$ assigns higher score to labels that are more certainly positive than to more uncertain labels. This is in contrast to uncertainty guided sampling and thus highlights the importance of Theorem \ref{thm:mutual_exclusion_theorem}. It also demonstrates that positive examples can, in some cases, be much more useful and informative than negative examples. \citet{Sharma:2013sh} discuss the sources of uncertainty and reinforce our argument about the ineffectiveness of naive uncertainty sampling for some kinds of logical constraints.

We would like to emphasize the relationship between using Equation \ref{eq:probability_scoring_function} as the scoring function, as proposed in Theorem \ref{thm:mutual_exclusion_theorem}, and using entropy (i.e., Equation \ref{eq:entropy_scoring_function}) as the scoring function, as is done in uncertainty guided sampling. The following proposition and corollary of our theorem describe this relationship more precisely.


\begin{proposition}
\label{prop:entropy_cp_equivalence_condition}
When:
\begin{equation}
	\argmax_{\substack{k=1,\hdots,K,\\i=1,\hdots,N}}{p_k^i}=\argmin_{\substack{k=1,\hdots,K,\\i=1,\hdots,N}}{|p_k^i-0.5|},
\end{equation}
the probability scoring function of Equation \ref{eq:probability_scoring_function} is equivalent to the entropy scoring function of Equation \ref{eq:entropy_scoring_function}, which is used by uncertainty guided sampling.
\end{proposition}
\begin{proof}
The proof follows immediately by noticing that for $p_k^i\in[0,1]$, the following holds:
\begin{equation*}
	\argmax_{\substack{k=1,\hdots,K,\\i=1,\hdots,N}}{\mathcal{S}_{\textrm{entropy}}(Y_k^i)}\equiv\argmin_{\substack{k=1,\hdots,K,\\i=1,\hdots,N}}{|p_k^i-0.5|}.\qedhere
\end{equation*}
\end{proof}
\begin{corollary}
\label{cor:entropy_cp_single_instance_special_case}
In the case of a single instance $X$ the probability scoring function of Equation \ref{eq:probability_scoring_function} and the entropy scoring function of Equation \ref{eq:entropy_scoring_function} are equivalent.
\end{corollary}
\begin{proof}
Due to the mutual exclusion constraint, $\smash{\sum_{k=1}^K{p_k}\leq 1}$, which implies that the condition of Proposition \ref{prop:entropy_cp_equivalence_condition} is always satisfied.
\end{proof}
It is easy to observe that when we have several instances $X^1,\hdots,X^N$ and we compare the scores of each label-instance pair, then the two scoring functions are no longer necessarily equivalent. Also note that as the number of labels grows, the marginals are more likely to have smaller magnitudes and thus the condition of Proposition \ref{prop:entropy_cp_equivalence_condition} is more likely to be satisfied.

\paragraph{A Different Approach.}

We now introduce a new concept that, when combined with the earlier motivation, gives rise to a new scoring function. The key intuition lies in the scenario where the discovery of a label being positive implies that all other labels are negative. This discovery may not be as valuable if the negative labels were already inferred to have low marginal probability. Instead, we propose to consider the {\em degree of surprise} of discovering that those labels are negative. For a label with marginal probability of being positive $p_k$, the amount of surprise can be defined in several ways. A function $\mathscr{S}:[0,1]\mapsto\mathbb{R}$ is called a {\em surprise function} if it is decreasing and $\mathscr{S}(1) = 0$. A couple examples of such {\em surprise functions} are shown here:
\begin{itemize}[noitemsep,topsep=0pt,leftmargin=*]
	\item {\sc Logarithmic}:	 $\mathscr{S}_{\textrm{log}}(p_k)\triangleq -\log{p_k}$. This is equivalent to the {\em self-information} of the event $Y_k=1$ and was first referred to as a surprise measure by \citet{Tribus:1961}.
	\item {\sc Linear}: $\mathscr{S}_{\textrm{lin}}(p_k)\triangleq 1-p_k$. This is associated to the 0-1 loss of \citet{Roy:2001wn}.
\end{itemize}
Using this definition of a surprise function, we define a new scoring function for the mutual exclusion case as follows:
\begin{equation}
\label{eq:mutual_exclusion_scoring_function}
\begin{split}
	\mathcal{S}_{\textrm{ME}}(Y_k)&\triangleq p_k\underbrace{\sum_{c=1}^K{\mathds{1}_{c=k}\mathscr{S}(p_c)+\mathds{1}_{c\neq k}\mathscr{S}(1-p_c)}}_{\textrm{Total surprise of setting }Y_k=1}, \\
	&\quad +(1-p_k)\underbrace{\vphantom{\sum_{c=1}^K}\mathscr{S}(1-p_k)}_{\mathclap{\textrm{Total surprise of setting }Y_k=0}},
\end{split}
\end{equation}

where $\mathscr{S}(\cdot)$ is an arbitrary surprise function, and $\mathds{1}_{\cdot}$ is the indicator function which is equal to $1$ if the condition in the subscript is satisfied and is equal to $0$ otherwise. Note that the first term is the product of the probability of $Y_k$ being equal to $1$ and the sum of surprise ``experienced'' by fixing the value of $Y_k$ to $1$ (i.e., after propagating the mutual exclusion constraint, we sum over the surprises of all other labels being set to $0$ and $Y_k$ being set to $1$). The second term is similarly defined as the product of the probability of $Y_k$ being equal to $0$ and the surprise of fixing $Y_k$ to that value. No other variables are considered in this surprise value, as no other label value is fixed because of the mutual exclusion constraint. Note that this is substantially different than the entropy scoring function in that it's measuring ``surprise'' rather than uncertainty.

\vspace{-0.5em}
\subsection{More General Logical Constraints}
\vspace{-0.1em}

The scoring function of Equation \ref{eq:mutual_exclusion_scoring_function} and the underlying intuition can easily be extended to more general logical constraints than mutual exclusion. An example of a more general logical constraint is {\em subsumption}. In this case, each label can have a set of parent and child labels, and a label being set to $1$ implies that its parent label is $1$. To extend the method introduced in the previous section, we need a function for propagating a fixed label-value pair through the constraints. Let this function be defined as $\smash{\mathcal{F}(Y_k = v) \triangleq \left\{ (Y_{c_i}, v_i) : \text{if } Y_k = v \text{, then } Y_{c_i} = v_i \right\}}$, where $c_i\in\{1,\hdots,K\}$ is a label index, and $v_i\in\{0,1\}$ is the value of $Y_{c_i}$ fixed by propagating the fixed label-value pair $(Y_k, v)$ through the constraints. We can now define our scoring function for general logical constraints as follows:
\begin{equation}
\label{eq:constraints_scoring_function}
\begin{split}
	&\mathcal{S}_{\textrm{constraints}}(Y_k)\triangleq p_k\underbrace{\sum_{(Y_{c_i},v_i)\in\mathcal{F}(Y_k = 1)}{S(Y_{c_i},v_i)}}_{\textrm{Total surprise of setting }Y_k=1}, \\
	&\hspace{1.9cm}+(1-p_k)\underbrace{\sum_{(Y_{c_i},v_i)\in\mathcal{F}(Y_k = 0)}{S(Y_{c_i},v_i)}}_{\textrm{Total surprise of setting }Y_k=0},
\end{split}
\end{equation}

where $S(Y_{c_i},v_i)=\mathds{1}_{v_i=1}\mathscr{S}(p_{c_i})+\mathds{1}_{v_i=0}\mathscr{S}(1-p_{c_i})$.

\vspace{-0.4em}
\paragraph{Formal Justification.}

We have not derived a result for the general scoring function similar to that of Theorem \ref{thm:mutual_exclusion_theorem}. However, we can use the information-theoretic information gain to generate an interesting result, akin to our justification for using the scoring function of Equation \ref{eq:probability_scoring_function} in the setting of mutual exclusion. We note that the information gain for the case with general logical constraints can be defined as a sum. The first term of this sum is the entropy of the label whose information gain is being computed. When the logarithmic surprise function is used with the scoring function of Equation \ref{eq:constraints_scoring_function}, then our scoring function contains this entropy term, as well as an approximation of some other terms (but not all) of the complete information gain sum. More specifically, we have that (in this derivation we ignore terms that are constant across all label variables, since these terms do not affect the ranking of the labels induced by the information gain):

\begin{align*}
	&\mathcal{I}(Y_k)=\mathcal{H}(Y_{-k})-\mathcal{H}(Y_{-k}\mid\mathcal{H}_k), \\
	&\;=\mathcal{H}(Y_{-k})+\mathcal{H}(Y_k)-H(Y)=\mathcal{H}(Y_k)-\mathcal{H}(Y_k\mid\bm{Y_{-k}}), \\
	&\;=\sum_{y_k}\bigg[-\mathbb{P}(y_k)\log{\mathbb{P}(y_k)}, \\[-7pt]
	&\hspace{1.5cm}+\sum_{\bm{y_{-k}}}{\mathbb{P}(\bm{y_{-k}})\mathbb{P}(y_k\mid\bm{y_{-k}})\log{\mathbb{P}(y_k\mid\bm{y_{-k}})}}\bigg], \\
	&\;=\sum_{y_k}\bigg[-\mathbb{P}(y_k)\log{\mathbb{P}(y_k)}, \\[-7pt]
	&\hspace{1.5cm}+\sum_{\bm{y_{-k}}}\big[\underbrace{\mathbb{P}(y_k,\bm{y_{-k}})\log{\mathbb{P}(y_k,\bm{y_{-k}})}}_{\textrm{Constant}}, \\[-10pt]
	&\hspace{3.0cm}-\mathbb{P}(y_k,\bm{y_{-k}})\log{\mathbb{P}(\bm{y_{-k}})\big]}\bigg], \\
	&\;=\sum_{y_k}\mathbb{P}(y_k)\bigg[-\log{\mathbb{P}(y_k)}, \\[-7pt]
	&\hspace{1.5cm}-\sum_{\mathclap{\substack{\bm{y_{-f}}\\\textrm{where }\bm{y_f}=\mathcal{F}(y_k)}}}{\quad\underbrace{\mathbb{P}(\bm{y_{-f}},\bm{y}_{\bm{f} \setminus k}\mid y_k)}_{\mathbb{P}(\bm{y_{-f}}\mid\bm{y_f})}\log{\mathbb{P}(\bm{y_{-f}},\bm{y}_{\bm{f} \setminus k})}}\bigg], \\
	&\;=\sum_{y_k}\mathbb{P}(y_k)\bigg[-\log{\mathbb{P}(y_k)}, \\[-7pt]
	&\hspace{1.5cm}-\sum_{\mathclap{\substack{\bm{y_{-f}}\\\textrm{where }\bm{y_f}=\mathcal{F}(y_k)}}}\quad\big[\underbrace{\mathbb{P}(\bm{y_{-f}}\mid\bm{y_f})}_{\textrm{Sums to }1}\log{\mathbb{P}(\bm{y_f})}, \\[-7pt]
	&\hspace{3.0cm}+\mathbb{P}(\bm{y_{-f}}\mid\bm{y_f})\log{\mathbb{P}(\bm{y_{-f}}\mid\bm{y}_{\bm{f} \setminus k})}\big]\bigg], \\
	&\;=\sum_{y_k}\mathbb{P}(y_k)\bigg[-\underbrace{\log{\mathbb{P}(y_k)}}_{\textrm{Entropy}}-\underbrace{\log{\mathbb{P}(\bm{y_f})}}_{\textrm{Constraints}}, \\[-7pt]
	&\hspace{2.0cm}-\sum_{\mathclap{\substack{\bm{y_{-f}}\\\textrm{where }\bm{y_f}=\mathcal{F}(y_k)}}}{\quad\underbrace{\mathbb{P}(\bm{y_{-f}}\mid\bm{y_f})\log{\mathbb{P}(\bm{y_{-f}}\mid\bm{y}_{\bm{f} \setminus k})}}_{\textrm{Remainder}}}\bigg],
\end{align*}

where $\bm{f}\setminus k$ is the set of label indices in $\bm{f}$ excluding $k$. Note that the entropy scoring function of Equation \ref{eq:entropy_scoring_function} only considers the term denoted by ``Entropy'' in this sum. When the logarithmic surprise function is used, the general scoring function of Equation \ref{eq:constraints_scoring_function} contains an approximation to the terms denoted by ``Constraints'' where the joint is written as the product of the marginals. This result shows that the general scoring function is, in some sense, a better heuristic for the information gain than the entropy scoring function.  

\vspace{-0.3em}
\subsection{Computational Complexity}
\vspace{-0.1em}

We will now consider the real-world use of an active learning system, where a request is made for a label of a particular instance. Note that if we were to use the information-theoretic information gain as our scoring function, then the cost would be linear in $N$ and exponential in $K$. Our scoring functions reduce this cost. The entropy scoring function of Equation \ref{eq:entropy_scoring_function} has a computational cost linear in the number of labels and the number of instances (i.e., because we need to compute it for all labels); its cost is $O(NK)$. The probability scoring function of Equation \ref{eq:probability_scoring_function} has the same cost. The mutual exclusion scoring function has cost $O(NK^2)$. Finally, ignoring the cost of the constraint propagation function, the general scoring function of Equation \ref{eq:constraints_scoring_function} has a computational cost of $O(NK^2)$, since the highest number of labels that can be fixed is $K$. Note that the constraint propagation function can have a cost exponential in $K$ in the worst case. However, there are special cases where the cost of that operator is not as high. For example, with either the mutual exclusion or the subsumption constraint the cost is linear in $K$. When mutual exclusion is combined with subsumption, we can alternate between all our constraints, one by one, and keep propagating them, until no fixed label can be propagated further. If the number of constraints is $C$, the cost of that propagation operation is $O(CK)$. This is the most complex scenario that we consider in our experiments and covers most of the practical use cases in multi-task applications.

%

\vspace{-0.6em}
\section{Experiments}
\label{sec:experiments}
\vspace{-0.1em}

In the following paragraphs, we describe the setup of our experiments, including the datasets and the evaluation metrics that we use, and the results and their corresponding analyses. All the datasets and code for the experiments are available at \url{https://github.com/eaplatanios/makina}.

\begin{table*}[!t]
\vspace{0.1em}
\caption{Datasets used in experiments.}
\vspace{-0.5em}
\label{tab:data_sets}
\vspace{-0.1in}
\begin{center}
\begin{small}
\begin{sc}
\begin{tabular*}{\textwidth}{l@{\extracolsep{\fill}}r@{\extracolsep{\fill}}r@{\extracolsep{\fill}}c@{\extracolsep{\fill}}r@{\extracolsep{\fill}}r@{\extracolsep{\fill}}r}
\hline
Dataset	& \#Classes	& \#Features		& Balanced?	& \#Training & \#Testing	& \#Requested/Iteration \\
\hline
SatImage	& 6			& 36				& $\times$	& 3,104		& 1,331			& 100 \\
Shuttle		& 7			& 9				& $\times$	& 30,450		& 13,050		& 1,000 \\
Segment		& 7			& 19				& $\surd$	& 400		& 1,910			& 100 \\
PenDigits	& 10		& 16				& $\surd$	& 7,494		& 3,498			& 100 \\
Letter		& 26		& 16				& $\surd$	& 15,000		& 5,000			& 1,000 \\
Nell-7		& 7			& 180,878		& $\times$	& 214		& 14,693		& 500 \\
Nell-11		& 11		& 180,878		& $\times$	& 242		& 14,693		& 1,000 \\
Nell-13		& 13		& 180,878		& $\times$	& 2,656		& 18,016		& 2,000 \\
\hline
\end{tabular*}
\end{sc}
\end{small}
\end{center}
\vskip -0.05in
\end{table*}

\begin{figure*}[!t]
	\centering
	\vspace{-0.3cm}
	\includegraphics[width=\textwidth]{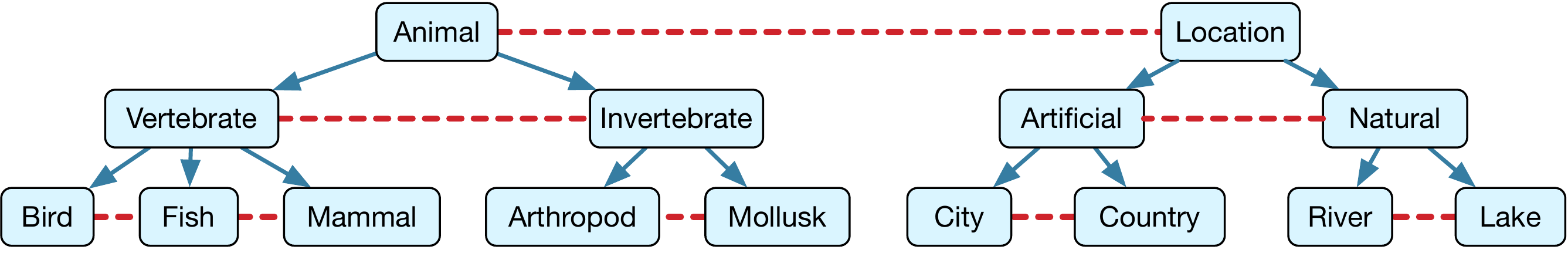}
	\vspace{-0.7cm}
	\caption{Illustration of the {\sc Nell-13} dataset constraints. Each box represents a label, each blue arrow represents a subsumption constraint, and each set of labels connected by a red dashed line represents a mutually exclusive set of labels.} 
	\label{fig:NELL_13_constraints}
\end{figure*}

We first define the names that we use to refer to different methods when plotting the results:
\begin{itemize}[noitemsep,topsep=0pt,leftmargin=*]
	\item \texttt{RANDOM} uses a random scoring function (i.e., using a random between $0$ and $1$ for the score).
	\item \texttt{ENTROPY} uses  the entropy scoring function of Equation \ref{eq:entropy_scoring_function}.
	\item \texttt{RANDOM-CP} is the same as \texttt{RANDOM}, but also propagates labels through the constraints.
	\item \texttt{ENTROPY-CP} is the same as \texttt{ENTROPY}, but also propagates labels through the constraints.
	\item \texttt{PROBABILITY-CP}: Using the probability scoring function of Equation \ref{eq:probability_scoring_function} and also propagating labels through the constraints.
	\item \texttt{LOG-CP}: Using the constraints scoring function of Equation \ref{eq:constraints_scoring_function} with the logarithm surprise function and also propagating labels through the constraints.
	\item \texttt{LINEAR-CP}: Same as \texttt{LOG-CP}, but using the linear surprise function instead of the logarithm.
\end{itemize}
We apply the same experimental setup to all of the datasets. Each dataset consists of a set of positive examples for each label. For each experiment, we split the dataset into training and testing subsets. For each label, we train a binary logistic regression classifier using the AdaGrad stochastic optimization algorithm of \citep{Duchi:2011wu}, with a batch size of $100$ samples per iteration. Our experimental pipeline consists of the following steps:
\begin{enumerate}[noitemsep,topsep=0pt,leftmargin=*]
	\item We initially train a classifier for each label independently using the training portion of the dataset. We consider all positive examples for the corresponding label, along with a set of negative examples of the same size, sampled from the remaining set of examples in the training dataset. 
	\item We repeat the following steps until all of the testing data have been manually labeled:
	\begin{enumerate}[noitemsep,topsep=0pt]
		\item Request a set of $M$ examples from the testing dataset to be manually labeled\footnote{Note that by ``example'' we mean a label-instance pair and so all possible label instance pairs from the testing dataset are considered at this stage.}, sequentially. For all the methods that include \texttt{CP} in their name, after each example is obtained, propagate all the logical constraints. The examples fixed by this process are considered manually labeled. Note that $M$ can vary across each dataset since they differ in size. Please refer to Table \ref{tab:data_sets} for the values of $M$ used for each dataset. Also note that this step differs across our methods. Each method's scoring function determines which examples are selected for labeling. The label-instance pair with the highest score is selected for labeling.
		\item Move all the labeled examples from the testing to the training dataset.
		\item Re-train the classifiers for all the labels, using the updated training dataset. Training for the classifiers is initialized at the previously learned point to reduce convergence time.
		\item Evaluate progress using a set of metrics and the full dataset (i.e., the training and testing parts of the dataset, combined). Note that even though it may seem unorthodox to evaluate on the full dataset, it is actually meaningful for settings like NELL. In fact, that is how NELL is evaluated, as we care about the accuracy of its whole knowledge-base, irrespective of how the label of an instance was obtained.
	\end{enumerate}
\end{enumerate}
\paragraph{A Note on Marginal Probabilities.} Note that all our methods and results of section \ref{sec:proposed_methods} rely on marginal probabilities. In our experiments, we use classifiers to estimate those marginals and sometimes they may not be very accurate. This is actually the reason we subsample a number of negative examples equal to the number of positive examples. Otherwise, our logistic regression classifiers would be biased towards low estimates of the probabilities, which would cause the entropy and our proposed scoring functions to perform very similarly, as shown in section \ref{sec:mutual_exclusion}. This was indeed the case when we ran experiments without subsampling the negative examples. This problem can also be alleviated by using more appropriate classifiers for the problem, than logistic regression.

\begin{figure*}[t!]
\scriptsize
    \begin{tabular}{m{0.08\textwidth} m{0.92\textwidth}}
    {\sc SatImage} & \includegraphics[width=0.92\textwidth,trim=0 60 0 0,clip]{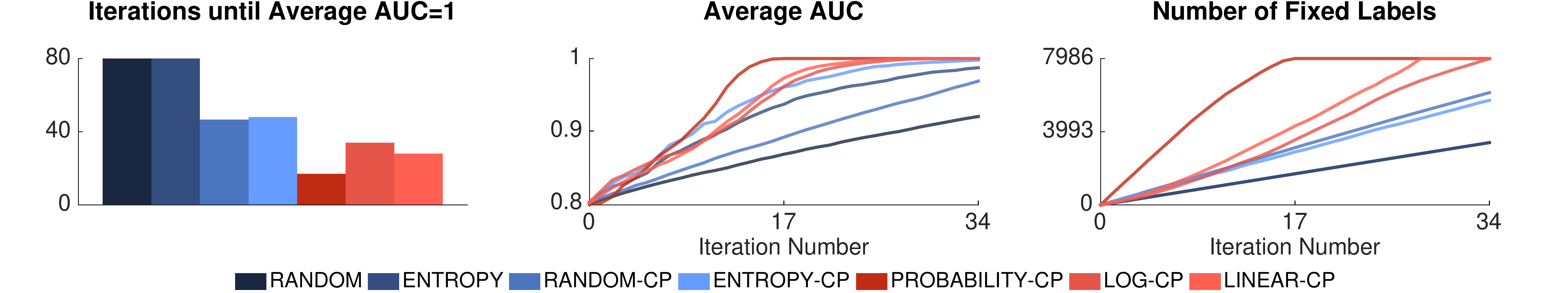} \\
    {\sc Shuttle} & \includegraphics[width=0.92\textwidth,trim=0 60 0 28,clip]{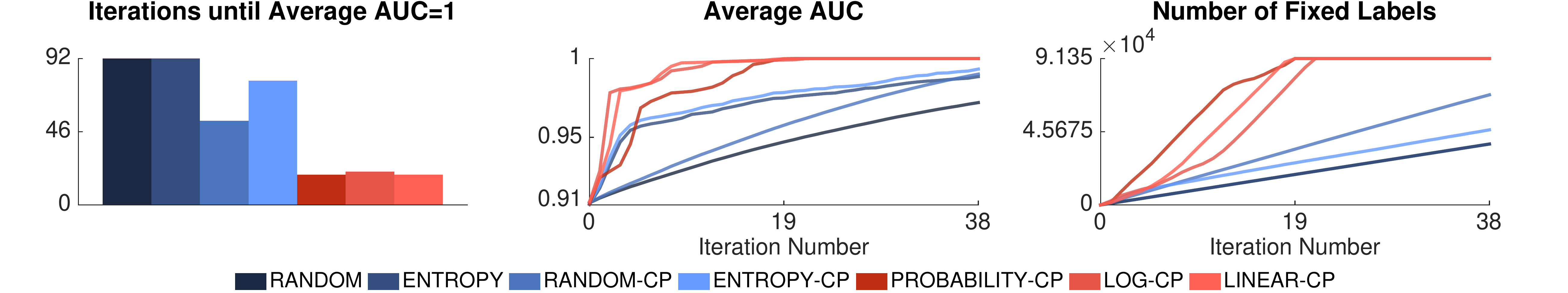} \\
    {\sc Segment} & \includegraphics[width=0.92\textwidth,trim=0 60 0 28,clip]{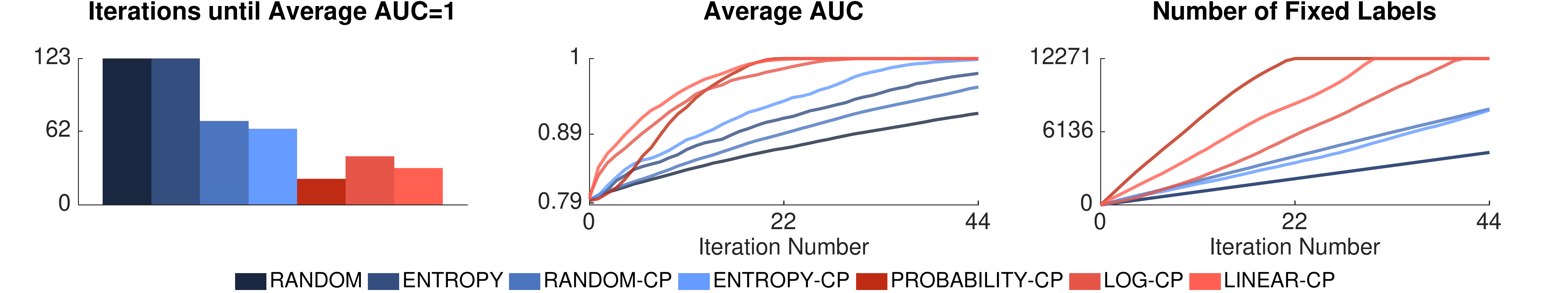} \\
    {\sc PenDigits} & \includegraphics[width=0.92\textwidth,trim=0 60 0 28,clip]{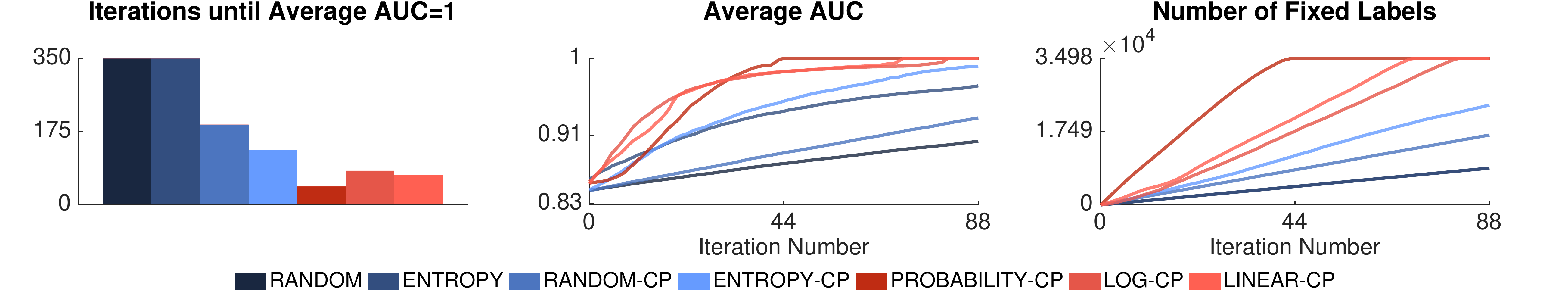} \\
    {\sc Letter} & \includegraphics[width=0.92\textwidth,trim=0 60 0 28,clip]{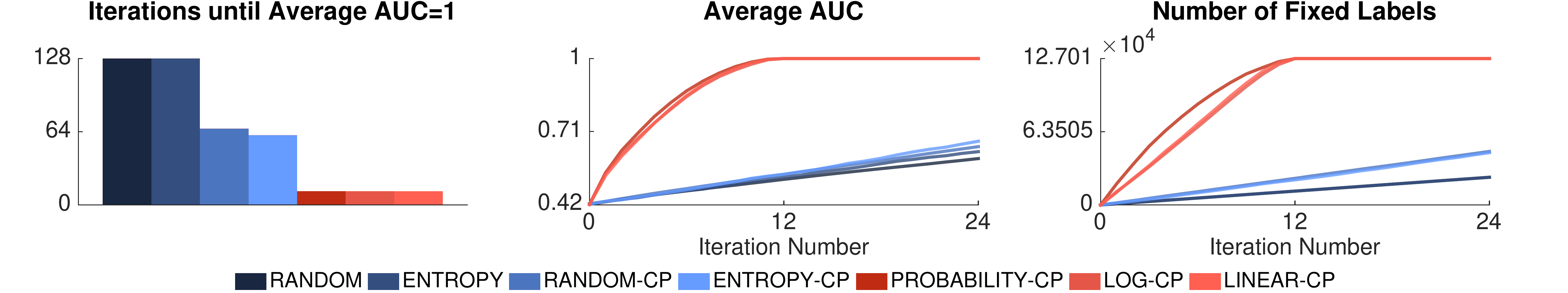} \\
    {\sc Nell-7} & \includegraphics[width=0.92\textwidth,trim=0 60 0 28,clip]{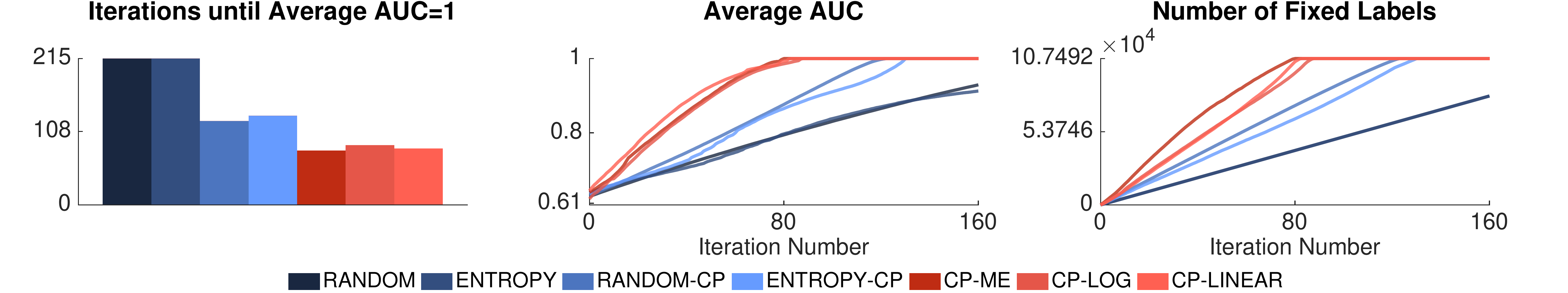} \\
    {\sc Nell-11} & \includegraphics[width=0.92\textwidth,trim=0 60 0 28,clip]{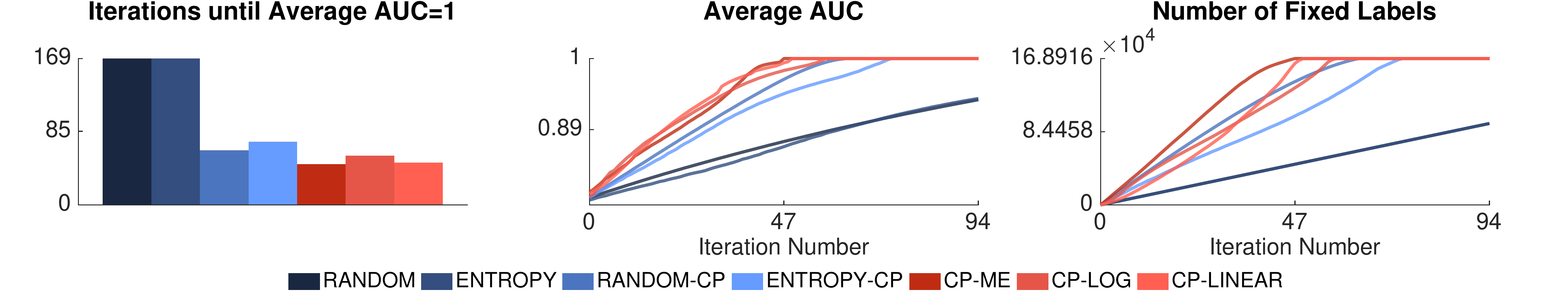} \\
    {\sc Nell-13} & \includegraphics[width=0.92\textwidth,trim=0 0 0 28,clip]{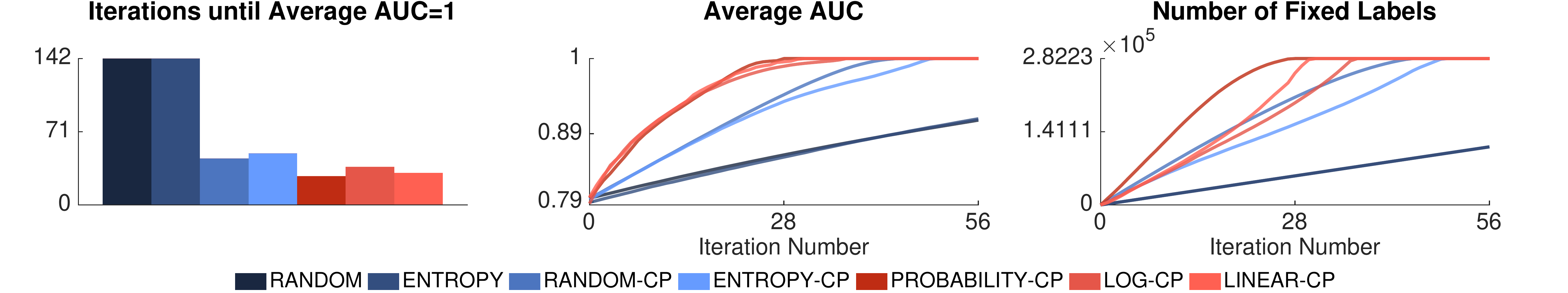}
    \end{tabular}
    \vspace{-0.5em}
    \caption{Results. The red colored plots refer to the proposed methods and the blue colored plots refer to existing methods (apart from the constraint propagation step that we optionally added to all existing methods to enable a fairer comparison, and which is denoted with a \texttt{-CP} appended to the method name). For the first plot, the lower the bar, the better the result. For the rest of the plots, the higher the value of the curve per iteration, the better the result. We thus observe that the proposed methods outperform all existing methods for all of the experiments performed.}
    \label{fig:results}
\end{figure*}

\vspace{-0.4em}
\subsection{Datasets}
\label{sec:data_sets}
\vspace{-0.1em}

We now provide the list of data sets we used for our experiments, with a small description for each data set. Table \ref{tab:data_sets} provides details on the statistics and experimental setup for each dataset. All data sets, except for the NELL data, were obtained from \url{https://www.csie.ntu.edu.tw/~cjlin/libsvmtools/datasets/multiclass.html}. The NELL data sets were obtained from \url{https://rtw.ml.cmu.edu/rtw/resources}. Details on the statistics and experimental setup for each data set are provided in table \ref{tab:data_sets}. Note that for all data sets except {\sc Nell-11} and {\sc Nell-13}, the only constraint used is a mutual exclusion constraint between all labels. The constraints used for those two NELL data sets are detailed in the following list.
\begin{itemize}[noitemsep,topsep=0pt]
	\item {\sc SatImage:} Classify a satellite image region \citep{Feng:1993}.
	\item {\sc Shuttle:} Classify a space shuttle as belonging to one of seven classes \citep{Feng:1993}.
	\item {\sc Segment:} Classify a small outdoor image region \citep{Feng:1993}.
	\item {\sc PenDigits:} Classify a handwritten digit \citep{Alimoglu:1991}.
	\item {\sc Letter:} Classify an image as a letter of the English alphabet \citep{Fray:1991}.
	\item {\sc Nell-7:} Classify noun phrases as belonging to a certain category or not. The categories considered for this data set are \texttt{Bird}, \texttt{Fish}, \texttt{Mammal}, \texttt{City}, \texttt{Country}, \texttt{Lake}, and \texttt{River} (i.e., the category represents the label in this case). The only constraint considered in this case is that all these categories are mutually exclusive. We use the same set of features as that used by the coupled pattern learner (CPL) in NELL \citep{Mitchell:2015wo}.
	\item {\sc Nell-11:} Perform the same task as {\sc Nell-7}, but additionally consider the categories \texttt{Animal}, \texttt{Location}, \texttt{Artificial Location}, and \texttt{Natural Location}. Also include the subsumption constraints shown in figure \ref{fig:NELL_13_constraints}, while ignoring the categories not included in this data set.
	\item {\sc Nell-13:} Perform the same task as {\sc Nell-7}, but with the categories and constraints illustrated in figure \ref{fig:NELL_13_constraints}.
\end{itemize}

\vspace{-0.4em}
\subsection{Evaluation Metrics}
\vspace{-0.1em}

We first define {\em average area under the curve (average AUC)}. At each iteration and for each label, we compute the AUC over the whole dataset (i.e., training dataset and testing dataset combined). Then, we compute a weighted average of the AUCs for each label, where each label's contribution is weighted by the number of positive examples that exist in the dataset, for that label. That weighted average is what we refer to as average AUC. It is easy to see that average AUC is a non-decreasing function with respect to iteration number\footnote{That nice property is the reason we use the combined dataset as opposed to just using the testing dataset.}. We use the following three metrics to evaluate the proposed methods:
\begin{itemize}[noitemsep,topsep=0pt,leftmargin=*]
	\item \uline{Iterations until Average AUC=1:} Number of iterations until average AUC $\geq 0.999$.
	\item \uline{Average AUC:} Average AUC vs iteration number.
	\item \uline{Number of Fixed Labels:} Number of labels that are effectively fixed (i.e., added to the training dataset), after each iteration. Note that this measure is not always equal to the number of labels requested because of the constraint propagation step.
\end{itemize}

\vspace{-0.4em}
\subsection{Results Analysis}
\vspace{-0.1em}

All results are shown in Figure \ref{fig:results}. We first note that the proposed methods {\em consistently beat the other methods by a significant margin, for all datasets and all evaluation metrics}. For the datasets that only consider a single mutual exclusion constraint, \texttt{PROBABILITY-CP} always performs best with respect to the number of iterations until average AUC=1. This is not unexpected; as we showed in Section \ref{sec:proposed_methods}, this method can be considered optimal. Furthermore, we find it interesting that, for the average AUC plots, in all cases where we only have a single mutual exclusion constraint, despite seeing underperformance in early iterations, \texttt{PROBABILITY-CP} still reaches AUC $=1$ faster. This may be based on the fact that this method first selects label-instance pairs with probability very close to $1$, which turn out to be positive. However, after a few iterations, the method experiences a boost and beats all of the other methods. As for the number of fixed labels per iteration, \texttt{PROBABILITY-CP} also beats the other methods by far. This provides validation of the intuition discussed in Section \ref{sec:proposed_methods}, that the method would fix more labels when the mutual exclusion constraint is propagated. As for the two datasets where we also have subsumption constraints and multiple mutual exclusion constraints, we see that the proposed methods consistently outperform all other methods, as expected. We did not expect, however, for \texttt{PROBABILITY-CP} to be doing as well as \texttt{LOG-CP} and \texttt{LINEAR-CP}. We do not yet have an understanding about this finding, but we find this interesting and encouraging for the proposed methods. We note that there are a few datasets with only a single mutual exclusion constraint, where \texttt{PROBABILITY-CP} is actually beaten early on in the average AUC curve, by our other two proposed methods. Thus, there is value in using these two methods in some scenarios. Finally, we found interesting that the constraint propagation step alone provides a significant performance boost to all methods.

\vspace{-0.5em}
\section{Conclusion}
\vspace{-0.1em}

We have proposed methods for performing active learning efficiently in the presence of logical constraints between the outputs of multiple classifiers. The approach resonates with underlying intuitions and challenges the core idea behind uncertainty guided sampling. We provided theoretical justification for using the proposed methods.  In a set of experiments, we found that the methods consistently outperformed competing methods across ten diverse datasets and thus appear to be promising for practical applications. Moreover, the experiments showed that our methods can be used to speed up the learning process in NELL.  Per our knowledge, this paper is the first to describe and carefully study methods for performing active learning when there are logical constraints among outputs of multiple classifiers. 

We are excited about numerous future directions for this work. Our first priority is to pursue additional theoretical results for the general setting with arbitrary logical constraints. We would also like to explore methods for a setting where all labels for a particular data instance are requested at each iteration; this use case is useful to systems like NELL where the label space is extremely sparse. We would also like to explore ways in which we can use accuracy estimates for the trained classifiers (using methods such as those proposed in \citep{Platanios:2016} and \citep{Platanios:2017}, which uses similar logical constraints, for example) in order to make the active learning procedures more robust. Implementing efficient computation of the value of information for multiple, interdependent classifiers would be a step towards autonomous learning systems with the ability to reflect more deeply about their pursuit of information.

\vspace{-0.4em}
\section*{Acknowledgements}
\vspace{-0.1em}

We would like to thank Abulhair Saparov and Otilia Stretcu for the useful feedback they provided in early versions of this paper. This research was performed during an internship at Microsoft Research.


\bibliography{paper}
\bibliographystyle{abbrvnat}

\end{document}


\maketitle

\section{Derivations}

\subsection{Proofs}

\begin{lemma}
\label{lem:mutual_exclusion_theorem_lemma}
Let $x\in[0,1]$, and $c\in[0,1-x]$. Then, the following function is monotonic with respect to $x$: $f(x)=(1-x-c)\log{(1-x-c)}-(1-x)\log{(1-x)}$.
\end{lemma}
\begin{proof}
We have that $\smash{\partial f(x)/\partial x=\log{(1-x)}-\log{(1-x-c)}}$ and since the logarithm is a monotonic function, we know that $\smash{\partial f(x)/\partial x\geq 0}$. Thus, $f(x)$ is monotonic.
\end{proof}

\begin{proposition}
\label{prop:entropy_cp_equivalence_condition}
When:
\begin{equation*}
	\argmax_{\substack{k=1,\hdots,K,\\i=1,\hdots,N}}{p_k^i}=\argmin_{\substack{k=1,\hdots,K,\\i=1,\hdots,N}}{|p_k^i-0.5|},
\end{equation*}
the probability scoring function of Equation 2 is equivalent to the entropy scoring function of Equation 1, which is used by uncertainty guided sampling.
\end{proposition}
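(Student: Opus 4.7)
The plan is to translate the joint $\argmax/\argmin$ equality into a structural constraint on the probability table $\{p_k^i\}$ and then invoke Lemma~\ref{lem:mutual_exclusion_theorem_lemma} to compare per-instance entropies. Let $(k^\ast,i^\ast)$ denote the common optimizer and write $p^\ast=p_{k^\ast}^{i^\ast}$. I would first observe that the hypothesis forces every other $p_k^i$ to satisfy $p_k^i\le 1-p^\ast$ or $p_k^i=p^\ast$: any value in the open interval $(1-p^\ast,\,p^\ast)$ would be strictly closer to $0.5$ than $p^\ast$, violating the argmin condition. This places all probabilities in an interval on which the lemma's monotonicity will have a definite sign.

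With this structure in hand, I would compare $H(p^{i^\ast})$ against $H(p^i)$ for an arbitrary instance $i$ by interpolating coordinate by coordinate from $p^i$ to $p^{i^\ast}$. Each step of the interpolation adjusts a single probability toward $0.5$ while the unchanged residual mass is lumped into a single slot; applying Lemma~\ref{lem:mutual_exclusion_theorem_lemma} at each step, with the lumped remainder playing the role of $c$ and the moving coordinate playing the role of $x$, gives a nonnegative entropy increment. Summing over coordinates yields $H(p^i)\le H(p^{i^\ast})$, so that $i^\ast$ is also the entropy-scoring maximizer and the probability scoring of Equation~2 selects the same instance as the entropy scoring of Equation~1.

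The main obstacle is the bookkeeping imposed by the simplex constraint $\sum_k p_k^i=1$: Lemma~\ref{lem:mutual_exclusion_theorem_lemma} is a two-piece statement with one moving variable and a fixed remainder, so interpolating between two distinct instances requires a path in which only one coordinate changes at a time, with compensating mass transferred to a designated slot so that the remainder $c$ really is constant at each step. Once such a path is fixed, each step is a direct invocation of the lemma's monotonicity, and the rest of the argument is a clean telescoping sum.
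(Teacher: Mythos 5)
Your proposal goes in the wrong direction because it reads the entropy scoring function of Equation~1 as a per-instance joint entropy $H(p^i)$ and then tries to show that the instance carrying the common optimizer maximizes that quantity. But Equation~1 scores individual label variables: $\mathcal{S}_{\textrm{entropy}}(Y_k^i)$ is the binary entropy $-p_k^i\log p_k^i-(1-p_k^i)\log(1-p_k^i)$ of the marginal of $Y_k^i$, and the argmax/argmin in the hypothesis range over pairs $(k,i)$. The intended argument is a one-liner: binary entropy is a strictly decreasing function of $|p_k^i-0.5|$, hence $\argmax_{k,i}\mathcal{S}_{\textrm{entropy}}(Y_k^i)=\argmin_{k,i}|p_k^i-0.5|$, and the hypothesis identifies this with $\argmax_{k,i}p_k^i$, i.e.\ with the selection made by the probability scoring of Equation~2. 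Lemma~\ref{lem:mutual_exclusion_theorem_lemma} plays no role here; it serves a different result.

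Beyond the misidentification of the scored quantity, the interpolation step would not go through. First, your conclusion only pins down the instance index $i^\ast$, while the proposition requires agreement on the pair $(k^\ast,i^\ast)$. Second, the inequality $H(p^i)\le H(p^{i^\ast})$ between instance-level entropies is simply false under your hypothesis: take $p^{i^\ast}=(0.5,0,0)$ and $p^{i}=(0.4,0.4,0.2)$ (both respecting $\sum_k p_k\le 1$); the global maximizer $0.5$ is also the unique value closest to $0.5$, so the hypothesis holds, yet the second instance has strictly larger summed binary entropy. Third, Lemma~\ref{lem:mutual_exclusion_theorem_lemma} requires the residual mass $c$ to remain fixed while $x$ moves, whereas your coordinate-by-coordinate path transfers compensating mass into a designated slot, changing the remainder at every step, so the lemma does not apply as stated.
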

\begin{proof}
The proof follows immediately by noticing that for $p_k^i\in[0,1]$, the following holds:
\begin{equation*}
	\argmax_{\substack{k=1,\hdots,K,\\i=1,\hdots,N}}{\mathcal{S}_{\textrm{entropy}}(Y_k^i)}\equiv\argmin_{\substack{k=1,\hdots,K,\\i=1,\hdots,N}}{|p_k^i-0.5|}.\qedhere
\end{equation*}
\end{proof}

\begin{corollary}
\label{cor:entropy_cp_single_instance_special_case}
In the case of a single instance $X$ the probability scoring function of Equation 2 and the entropy scoring function of Equation 1 are equivalent.
\end{corollary}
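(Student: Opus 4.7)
The plan is to derive this corollary as a direct specialization of Proposition~\ref{prop:entropy_cp_equivalence_condition} to the regime $N=1$. First, I would observe that when the instance set consists of only $X$, the outer $\argmax$ and $\argmin$ in the hypothesis of Proposition~\ref{prop:entropy_cp_equivalence_condition} are taken over a single value of $i$, so both double arg-operations collapse to operations over the label index $k$ alone. Consequently, the hypothesis reduces to the condition $\argmax_k p_k^1 = \argmin_k |p_k^1 - 0.5|$.

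The main step is then to check that this reduced equality holds automatically in the single-instance case. I expect this to follow from the specific form of the probability scoring function of Equation~2: with no cross-instance terms left to evaluate, Equation~2 depends only on the marginal probabilities $p_k^1$ and, restricted to $i=1$, orders labels monotonically in $|p_k^1 - 0.5|$, just as the binary entropy of Equation~1 does. Hence both scoring functions rank labels identically and their maximizers necessarily coincide.

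With the hypothesis of Proposition~\ref{prop:entropy_cp_equivalence_condition} verified, the conclusion of that proposition immediately yields the equivalence of the two scoring functions on the single instance $X$. The principal obstacle is articulating cleanly why the single-instance restriction forces the two arg-operations to coincide; I expect this to amount to an inspection of Equation~2 at $i=1$ rather than any substantive new calculation, since the hard inequality work has already been absorbed into Lemma~\ref{lem:mutual_exclusion_theorem_lemma} and Proposition~\ref{prop:entropy_cp_equivalence_condition}.
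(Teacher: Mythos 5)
Your overall strategy---reduce the corollary to verifying the hypothesis of Proposition~\ref{prop:entropy_cp_equivalence_condition} for $N=1$---is exactly the paper's route, and the first and last paragraphs of your plan are fine. The gap is in the main step. You assert that the reduced condition $\argmax_k p_k = \argmin_k |p_k - 0.5|$ ``holds automatically in the single-instance case'' because there are ``no cross-instance terms,'' and that Equation~2 restricted to $i=1$ already orders labels by $|p_k-0.5|$. That cannot be the reason: if the probability scoring function already ordered labels by $|p_k-0.5|$, the hypothesis of Proposition~\ref{prop:entropy_cp_equivalence_condition} would be vacuous and the proposition would need no hypothesis at all. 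Moreover, the single-instance restriction by itself does not force the two arg-operations to coincide: with unconstrained marginals such as $p_1 = 0.9$ and $p_2 = 0.5$ one has $\argmax_k p_k = 1$ but $\argmin_k |p_k - 0.5| = 2$.

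The missing ingredient is the mutual exclusion constraint, which is what the paper's proof invokes. For a single instance it forces $\sum_{k=1}^K p_k \leq 1$, so at most one label can have $p_k > 0.5$. If no label does, then $|p_k - 0.5| = 0.5 - p_k$ for all $k$ and the two arg-operations trivially coincide. If some $p_{k^*} > 0.5$, then every other $p_j \leq 1 - p_{k^*} < 0.5$, whence $|p_j - 0.5| \geq p_{k^*} - 0.5 = |p_{k^*} - 0.5|$, so $k^*$ is simultaneously the maximizer of $p_k$ and the minimizer of $|p_k - 0.5|$. You should replace the ``no cross-instance terms'' argument with this observation; as written, your proof does not establish the condition it needs.
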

\begin{proof}
Due to the mutual exclusion constraint, $\smash{\sum_{k=1}^K{p_k}\leq 1}$, which implies that the condition of Proposition \ref{prop:entropy_cp_equivalence_condition} is always satisfied.
\end{proof}

\subsection{Derivation for Section 2.2.1}

We start with the definition of information gain. Note that we shall ignore terms that are constant across all label variables, since these terms do not affect the ranking of the labels induced by information gain.
\begin{align*}
	&\mathcal{I}(Y_k)=\mathcal{H}(Y_{-k})-\mathcal{H}(Y_{-k}\mid\mathcal{H}_k)=\mathcal{H}(Y_{-k})+\mathcal{H}(Y_k)-H(Y)=\mathcal{H}(Y_k)-\mathcal{H}(Y_k\mid\bm{Y_{-k}}), \\
	&\;=\sum_{y_k}\bigg[-\mathbb{P}(y_k)\log{\mathbb{P}(y_k)}+\sum_{\bm{y_{-k}}}{\mathbb{P}(\bm{y_{-k}})\mathbb{P}(y_k\mid\bm{y_{-k}})\log{\mathbb{P}(y_k\mid\bm{y_{-k}})}}\bigg], \\
	&\;=\sum_{y_k}\bigg[-\mathbb{P}(y_k)\log{\mathbb{P}(y_k)}+\sum_{\bm{y_{-k}}}\big[\underbrace{\mathbb{P}(y_k,\bm{y_{-k}})\log{\mathbb{P}(y_k,\bm{y_{-k}})}}_{\textrm{Constant}}-\mathbb{P}(y_k,\bm{y_{-k}})\log{\mathbb{P}(\bm{y_{-k}})\big]}\bigg], \\
	&\;=\sum_{y_k}\mathbb{P}(y_k)\bigg[-\log{\mathbb{P}(y_k)}-\sum_{\mathclap{\substack{\bm{y_{-f}}\\\textrm{where }\bm{y_f}=\mathcal{F}(y_k)}}}{\quad\underbrace{\mathbb{P}(\bm{y_{-f}},\bm{y}_{\bm{f} \setminus k}\mid y_k)}_{\mathbb{P}(\bm{y_{-f}}\mid\bm{y_f})}\log{\mathbb{P}(\bm{y_{-f}},\bm{y}_{\bm{f} \setminus k})}}\bigg], \\
	&\;=\sum_{y_k}\mathbb{P}(y_k)\bigg[-\log{\mathbb{P}(y_k)}-\sum_{\mathclap{\substack{\bm{y_{-f}}\\\textrm{where }\bm{y_f}=\mathcal{F}(y_k)}}}\quad\big[\underbrace{\mathbb{P}(\bm{y_{-f}}\mid\bm{y_f})}_{\textrm{Sums to }1}\log{\mathbb{P}(\bm{y_f})}+\mathbb{P}(\bm{y_{-f}}\mid\bm{y_f})\log{\mathbb{P}(\bm{y_{-f}}\mid\bm{y}_{\bm{f} \setminus k})}\big]\bigg], \\
	&\;=\sum_{y_k}\mathbb{P}(y_k)\bigg[-\underbrace{\log{\mathbb{P}(y_k)}}_{\textrm{Entropy}}-\underbrace{\log{\mathbb{P}(\bm{y_f})}}_{\textrm{Constraints}}\quad-\quad\sum_{\mathclap{\substack{\bm{y_{-f}}\\\textrm{where }\bm{y_f}=\mathcal{F}(y_k)}}}{\quad\underbrace{\mathbb{P}(\bm{y_{-f}}\mid\bm{y_f})\log{\mathbb{P}(\bm{y_{-f}}\mid\bm{y}_{\bm{f} \setminus k})}}_{\textrm{Remainder}}}\bigg],
\end{align*}
where $\bm{f}\setminus k$ is the set of label indices in $\bm{f}$ excluding $k$.

\section{Experiments}

\subsection{Datasets}
\label{sec:data_sets}

In this section we provide the list of datasets we used for our experiments, with a small description for each dataset. All datasets, except for the NELL data, were obtained from \url{https://www.csie.ntu.edu.tw/~cjlin/libsvmtools/datasets/multiclass.html}. The NELL datasets were obtained from \url{https://rtw.ml.cmu.edu/rtw/resources}. Details on the statistics and experimental setup for each dataset are provided in Table 1 of the main paper. Note that for all datasets except {\sc Nell-11} and {\sc Nell-13}, the only constraint used is a mutual exclusion constraint between all labels. The constraints used for those two NELL datasets are detailed in the following list.
\begin{itemize}[noitemsep,topsep=0pt,leftmargin=*]
	\item {\sc Iris:} Classify an iris plant \citep{Fisher:1936}.
	\item {\sc SatImage:} Classify a satellite image region \citep{Feng:1993}.
	\item {\sc Shuttle:} Classify a space shuttle as belonging to one of seven classes \citep{Feng:1993}.
	\item {\sc Segment:} Classify a small outdoor image region \citep{Feng:1993}.
	\item {\sc PenDigits:} Classify a handwritten digit \citep{Alimoglu:1991}.
	\item {\sc Vowel:} Classify an utterance of a vowel \citep{Deterding:89}.
	\item {\sc Letter:} Classify an image as a letter of the English alphabet \citep{Fray:1991}.
	\item {\sc Nell-7:} Classify noun phrases as belonging to a certain category or not. The categories considered for this dataset are \texttt{Bird}, \texttt{Fish}, \texttt{Mammal}, \texttt{City}, \texttt{Country}, \texttt{Lake}, and \texttt{River} (i.e., the category represents the label in this case). The only constraint considered in this case is that all these categories are mutually exclusive. We use the same set of features as that used by the coupled pattern learner (CPL) in NELL \citep{Mitchell:2015wo}.
	\item {\sc Nell-11:} Perform the same task as {\sc Nell-7}, but additionally consider the categories \texttt{Animal}, \texttt{Location}, \texttt{Artificial Location}, and \texttt{Natural Location}. Also include the subsumption constraints shown in Figure 2 of the main paper, while ignoring the categories not included in this dataset.
	\item {\sc Nell-13:} Perform the same task as {\sc Nell-7}, but with the categories and constraints illustrated in Figure 2 of the main paper.
\end{itemize}

\bibliography{paper}
\bibliographystyle{abbrvnat}